\newcommand{\vtc}{{SVT}\xspace}
\newcommand{\maxv}{{$v_{max}$}\xspace}
\newcommand{\scenario}{{SC}\xspace}
\newcommand{\mbd}{{$d_{max}$}\xspace}
\newtheorem{theorem}{Theorem}
\newtheorem{definition}{Definition}
\title{\LARGE \bf
Visual Tracking with Intermittent Visibility: Switched Control Design and Implementation
}
\author{Yangge Li$^{1}$, Benjamin C Yang$^{2}$ and Sayan Mitra$^{3}$
\thanks{$^{1}$Yangge Li, University of Illinois Urbana-Champaign
        {\tt\small li213@illinois.edu}}%
\thanks{$^{2}$ Benjamin C Yang, University of Illinois Urbana-Champaign
        {\tt\small bcyang2@illinois.edu}}%
\thanks{$^{3}$ Sayan Mitra, University of Illinois Urbana-Champaign
        {\tt\small mitras@illinois.edu}}%
}
\begin{document}

\maketitle
\thispagestyle{empty}
\pagestyle{empty}

\begin{abstract}

This paper addresses the problem of visual target tracking in scenarios where a pursuer may experience intermittent loss of visibility of the target. The design of a  \textit{Switched Visual Tracker} (\vtc) is presented which aims to meet the competing requirements of maintaining both proximity and  visibility. \vtc alternates between a visual tracking mode for following the target, and a recovery mode for regaining visual contact when the target falls out of sight. We establish the stability of \vtc by extending the average dwell time theorem  from switched systems theory, which  may be of independent interest.
Our implementation of \vtc on an Agilicious drone~\cite{Foehn22science} illustrates its effectiveness on tracking various target trajectories: it  reduces the average tracking error by up to 45\% and significantly improves  visibility duration compared to a baseline algorithm. 
The results show that our approach effectively handles intermittent vision loss, offering enhanced robustness and adaptability for real-world autonomous missions.
Additionally, we demonstrate how the stability analysis provides valuable guidance for selecting 
parameters, such as tracking speed and recovery distance, to optimize the \vtc’s performance.

\end{abstract}

\section{Introduction}


The visual tracking task
requires a \textit{pursuer} to follow a moving \textit{target} using only camera as a sensor. This capability is relevant for  search and rescue, delivery, spacecraft docking, in-air-refuelling, navigation, and other autonomous missions. 
Visual tracking, also called visual pursuit,
has been studied by robotics and aerospace researchers~\cite{6564808,6842280,6086794,8205986} (see, other related works in Section~\ref{sec:related}). A popular approach is for the pursuer's controller to minimize the  tracking error defined on image space.
These methods are effective if the  target is always visible, but cannot recover once it is  lost from the pursuer's camera view. 
%
%
The target may be lost because of motion blur, occlusions, or simply because it moves out of the pursuer's camera frame. The latter is more likely as the pursuer nears the target and the camera's viewable field becomes narrower. Thus,  the two goals of maintaining visibility and gaining proximity  can be conflicting.



To tackle this challenge, we propose the design of a \textit{Switched Visual Tracker} (\vtc)---a mode-switching controller~\cite{DL2003} that tracks the target when it is visible, in what we call a {\em visual tracking mode\/}, and maneuvers to regain visual contact when the target is lost, in a {\em recovery mode}.
The design of \vtc includes the logic for switching between these two modes. A switching controller for landing a drone on a moving vehicle was presented  in~\cite{Gomez-Balderas2013}. That work, like ours, handles loss of visual contact of the moving target and showed interesting empirical results. To our knowledge, ours is the first to provide  a stability analysis and connect the stability criteria with the control design parameter.  
%

There is extensive research on stability analysis of switched systems~\cite{DL2003,hespanha99stability,teel2012,DingT10}. Hespanha and Morse's {\em average dwell time\/} theorem~\cite{hespanha99stability} gives a stability criterion in terms of the rate of energy (or Lyapunov function) decay  in the individual modes ($\lambda$), the energy gains across the mode switches, and the  rate of mode switches.
To accommodate the analysis of \vtc with recovery, we generalize this theorem to Theorem~\ref{thm:hybrid_stability} which allows the system to be temporarily unstable, which  leads to an additive ($c$) and multiplicative ($\mu$) increase in the Lyapunov functions.
We show that, given a sufficiently long average dwell time {\em in the stable modes}, the system can still achieve asymptotic stability with respect to a set of states. For \vtc this implies guaranteed tracking performance.


We compare the effectiveness of \vtc  implemented on the Agilicious drone~\cite{Foehn22science}, with a  baseline visual tracking controller. \vtc improves  the average  tracking error by 45\%  and also significantly improves the  fraction of time the target is visible. 
Further, we observe Theorem~\ref{thm:hybrid_stability}  can be used to guide the choice of  various \vtc parameters for improving the system's performance with respect to tracking and visibility requirements.
For example, according to Theorem~\ref{thm:hybrid_stability}, a higher tracking speed (\maxv) increases the Lyapunov exponent $\lambda$, which reduces tracking  by allowing smaller dwell time. We observe from experiments that increasing \maxv from 0.1m/s to 1.0m/s indeed improves the average tracking error from 1.3m to 0.5m. A smaller recovery distance (\mbd) reduces $\mu$ and $c$ (characterizing  the unstable recovery). By reducing the recovery distance from 2.7m to 2.1m, experiments show the average tracking error improves  0.74m to 0.56m while target visibility improves from from 82.8 to 86.4\%. 
%
%
In summary, our contribution are as follows:
\begin{enumerate}
    \item The design of a Switched Visual Tracker (\vtc) for the visual tracking  problem, which effectively handles intermittent loss of target visibility.
    \item A rigorous stability analysis of the \vtc based on a modest extension of an existing  switched system stability result. This provides  guarantees on the tracking performance and guidelines for choosing  parameters in  the implementation of \vtc.
    \item An implemention of \vtc on an Agilicous-based pursuer drone and comprehensive experimental evaluations with different target trajectories and design parameters.
\end{enumerate}

\section{Visual Tracking: Problem \& Control Design}
\label{sec:method}


The \textit{visual target tracking} problem involves a  {\em target\/} and a {\em pursuer\/}, both moving in space. 
The pursuer has a camera with a limited field of view $\theta$ (Fig.~\ref{fig:recovery_demo}), and no prior knowledge of the target's trajectory. 
The goal is to design a {\em controller} that enables the pursuer to follow the target closely {\em and\/} maintain visual contact.  

\begin{figure}[h] 
    \centering
\includegraphics[width=0.9\linewidth]{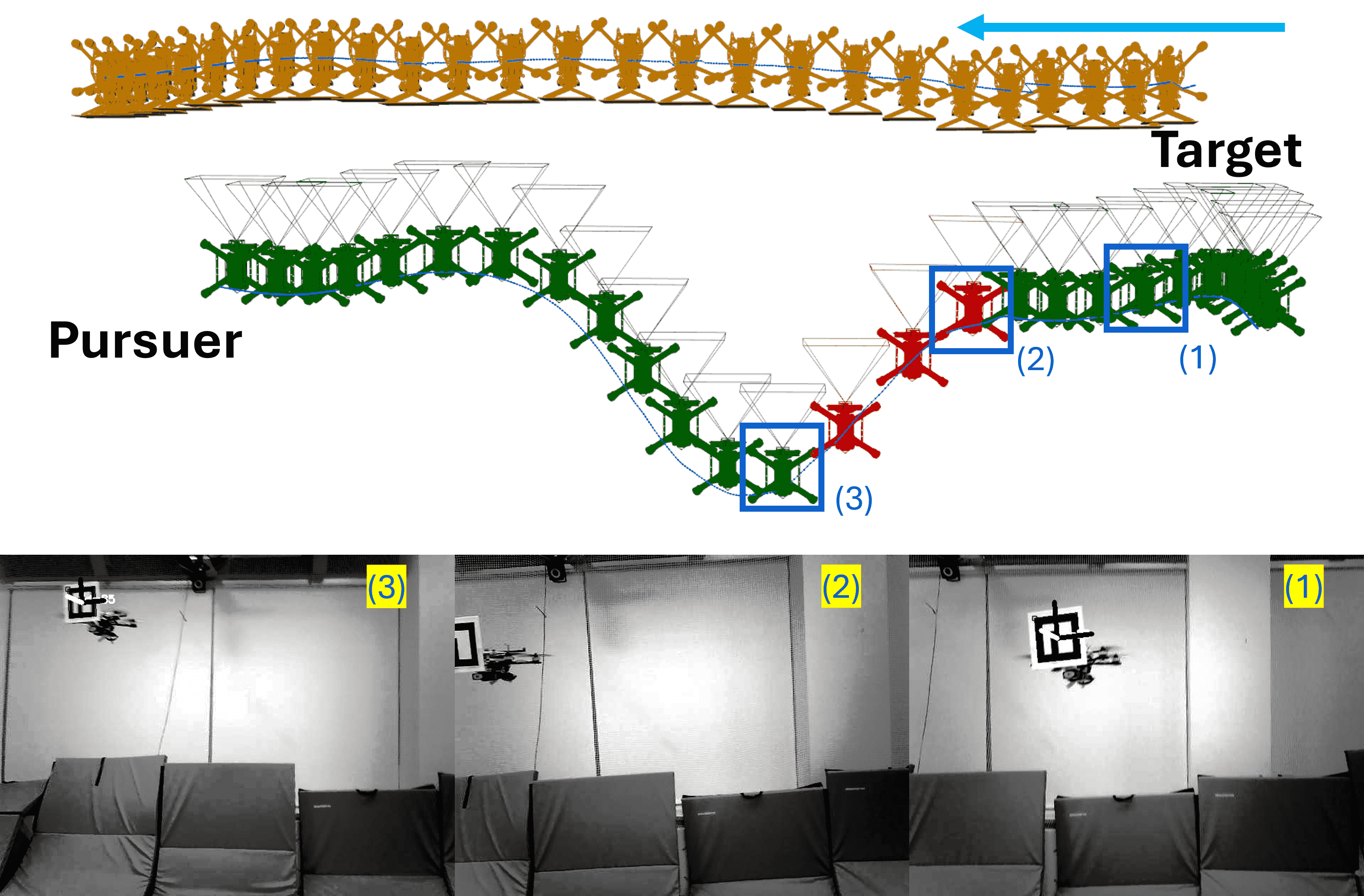}
    \caption{\small The target (orange) moves from right to left and is initially visible to the pursuer (green). When the target exits the pursuer's view (red), the pursuer performs a backoff maneuver to regain visibility. The pursuer's field of view (FoV) angle, 
    $\theta$, is represented by the triangle in front. The bottom row shows the camera images for three states of the pursuer. 
    }
    \label{fig:recovery_demo}
\end{figure}

The control design is challenging because there is tension between the two goals of maintaining visual contact and gaining proximity. 
The pursuer may intermittently lose sight of the target because of detection failures (e.g., motion blur, lighting) or the target maneuvering outside the pursuer's camera frame. When this happens, the pursuer has to regain visual contact by falling behind the target's {\em estimated position\/} and potentially getting further away from it. A simple instance of this is shown in Figure~\ref{fig:recovery_demo}. 

We tackle this challenge by designing a  {\em mode switching controller\/}~\cite{DL2003}: in the {\em visual tracking mode\/} the pursuer attempts to get close to the target, and in the {\em recovery mode} it maneuvers to regain visual contact. Next, we discuss the particulars of this design, including the mode switching logic. In Section~\ref{sec:thm}, we will provide a general result on stability analysis of such switching controllers and in Section~\ref{sec:experiments} we will discuss our implementation of the controller on quadrotors. 

\subsection{Design of  Switched Visual Tracking}
\label{sec:method:vttp_overview}

The position of the target ($T$) is modeled as a function of time or a {\em trajectory\/} in 3D-space 
$x_T^d: \mathbb{R}^{\geq 0} \rightarrow \mathbb{R}^3$.
That is,  $x_T^d(t)$ is the position of the target at time $t$.
%
%
The {\em Switched Visual Tracker (\vtc)\/} uses an {\em observer} that provides the relative position of the target to the pursuer ($P$), when the target (T) is visible. This observer can be implemented using fiducial markers~\cite{GARRIDOJURADO20142280}, perspective-n-point~\cite{9549863}, or various deep learning based approaches~\cite{s24041076,liu2024survey}, and our implementation is discussed in Section~\ref{sec:experiments:setup}.
%
We mathematically  model this camera-based observer as follows: 
the observer takes as input the positions of both the pursuer  and the target. 
If the target is in view,  
then the observer outputs the displacement vector $x_T^d-x_P^d$, otherwise, it outputs $\bot$ indicating that the target is not visible.
%
We define visibility of target by the pursuer as a function $visible(x_P, x_T;\theta ) $ which returns True when $V\cdot R>0, |V\times R| \leq (V\cdot R)tan(\theta)$, where $V=x_T^d-x_P^d$ and $R$ is obtained by the applying the rotation matrix of the pursuer to a unit vector in z direction. It returns False otherwise. 
Our analysis accommodates estimation errors in $x_T^d - x_P^d$ but for the sake of simplifying the presentation we use the above observer  model. 

The visibility of the target determines the operating  mode of the \vtc, i.e. when the observer returns the full state of the target, the \vtc will operate in visual ($V$) tracking mode and when the observer returns $\bot$, the \vtc will operate in the recovery ($R$) mode. 

When \vtc is operating in visual tracking mode, the pursuer will follow dynamic equation 
\[
    \dot{x_P} = f_V(x_P, g(x^d_T-x^d_P))
\]
where $x_P$ is the full state of the pursuer. 
The function $g$ is a tracking controller which takes as input the displacement between the target and the pursuer $x^d_T-x^d_P$ and generates a control input that aims to drive the pursuer to a specified offset from the target. We use standard controller design techniques for creating such a tracking controller~\cite{9794477}. 

When \vtc is operating in the recovery mode, the pursuer will perform maneuver to regain visual contact to the target. It will follow dynamics equation
\[
    \dot{x_P} = f_R(x_P, x_R),
\]
where $x_R$ is a {\em recovery pose\/}, from where the pursuer can see the target. The dynamics $f_R$ will drive the pursuer directly to the recovery pose. 

\subsection{Computing Recovery Pose}
\label{sec:recover_pose}

The recovery maneuver is parameterized by a time constant $t_R>0$ and  has three steps: 
(1) the controller uses {\em reachability analysis\/}~\cite{ARCH23:ARCH_COMP23_Category_Report} to predict a set $Reach(x_T(0),  t_{R})$ of positions where the target could possibly be located at $t_R$ starting from its last seen position $x_T(0)$;  
(2) it computes a pose $x_R$ from which $Reach(x_T(0),  t_{R})$  is visible, and (3) the pursuer moves to $x_R$.
Figure~\ref{fig:recovery_dist} shows the main idea. 

In more detail, reachability analysis is a well-established technology with many available tools that can compute $Reach(x_T(0),  t_{R})$  assuming that the target has a dynamic model 
$\dot{x_T} = f_T(x_T, u_T)$, 
with input (or disturbance) $u_T$ in some range  $U_T$.
Access to the actual dynamic model of the target is not required. 
%
The recovery point $x_R$ is computed so that $Reach(x_T(0), t_{R})$  falls within the pursuer's camera frame, which then is guaranteed to include the target in its viewing range.



It not always  possible that the pursuer can reach the computed recovery point $x_R$ within the recovery time $t_R$, in which case the recovery maneuver would fail. We formalize the recoverability criteria as follows:

\begin{definition}
\label{def:recoverability}
From initial positions of the pursuer $x_P(0)$ and the target $x_T(0)$ with $visible(x_P(0),x_T(0),\theta)$, the system is  {\em $t_R$-recoverable\/}  if the computed recovery pose $x_R$ is such that 
(1) $x_T(t_R) \in Reach(x_T(0), t_R)$, and
(2) $x_P(0)$ can reach $x_R$ in time $t_R$.
\end{definition}

In Section~\ref{sec:thm}  we will assume recoverability for the analysis of \vtc and 
in Section~\ref{sec:experiments} we will empirically evaluate the recoverability for our particular experimental setup.

\begin{figure}[h]
    \centering    \includegraphics[width=0.6\linewidth]{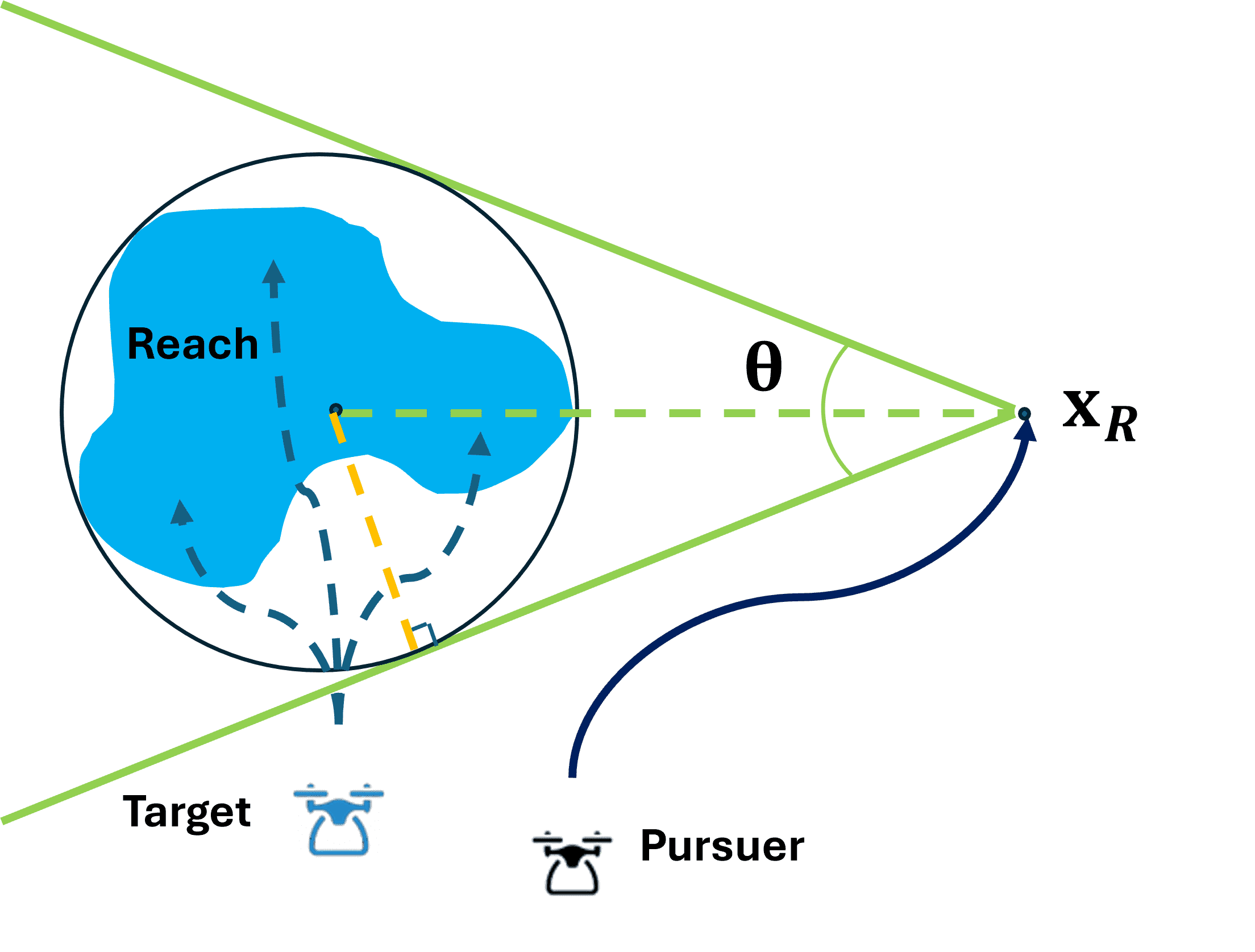}
    \caption{Computation of the recovery pose $x_R$. The pursuer (black) at $x_R$ can include the whole reachable set of target (blue) in it's camera range. 
    }
    \label{fig:recovery_dist}
\end{figure}

\section{Stability of Target Tracking System}
\label{sec:thm}
In this section, we will present a stability analysis of the \vtc. 
We can view this as a {\em  switched system\/}---a general class of dynamical systems in which an algorithm occasionally changes the plant dynamics. 
In \vtc, the two plant dynamics or {\em modes\/} are the the visual and the recovery modes and the switching decision is determined by the result of 
$visible(x_P, x_T;\theta)$.
%
Our analysis is based on  Morse and Hespanha's Average dwell time theorem~\cite{hespanha99stability} which gives a stability criteria for switched systems. Roughly, that theorem (Theorem~4 of~\cite{hespanha99stability}) says that if the individual modes of the switched systems have  decaying  energy (or, more generally, Lyapunov functions $V_i$'s) and the increase in energy owing to mode switches is bounded, then the system is asymptotically stable if the rate of switching is not too high. The theorem gives the maximum allowable rate of mode switches in terms of an {\em average dwell time}, as a function of the rate of Lyapunov decay and the gains. Our Theorem~\ref{thm:hybrid_stability} below slightly generalizes that result by allowing some of the individual modes to be unstable for a bounded amount of time. This is necessary for analyzing \vtc because the tracking error can potentially increase in the recovery mode. We state Theorem~\ref{thm:hybrid_stability} in general terms because this may be of broader interest beyond the analysis of \vtc.


\subsection{Stability of Switching System}
A {\em switched system\/} with a finite set of {\em modes\/} $P$ is described by:
\begin{align}
  \dot{x}(t) = f_{\sigma(t)}(x(t)),
  \label{eq:ss}
\end{align}
where $x(t)$ is the plant state at time $t$, 
each $f_p:\mathbb{R}^n\rightarrow \mathbb{R}^n$ describes the dynamics in mode $p$, and
 $\sigma: [0, \infty) \rightarrow P$ is called a 
{\em switching signal} which chooses a particular mode  $p \in P$ for each time $t$. The switching signal $\sigma$ is a piece-wise constant\footnote{Strictly, its a c\`{a}dl\`{a}g function with $\sigma(t) = lim_{\tau\rightarrow t^+} \sigma(\tau)$, for each $t\geq 0$.} function that abstractly models the behavior of the mode switching logic as a function of time. 
The discontinuities in $\sigma$ are called {\em switching times\/}.

Fixing a switching signal $\sigma$ and an initial state $x_0$ uniquely defines a solution or an {\em execution\/} of the switched system which we denote by $x(t)$ (suppressing the dependence on $x_0$ and $\sigma$). We refer the reader to~\cite{DL2003} for the mathematical definition of solution of switched systems.
%
An execution $x(t)$ of the switched system is {\em asymptotically stable\/} with respect to a set $X^*$\footnote{The reason why it's converging to $X^*$ instead of 0 will be discussed in Theorem~\ref{thm:hybrid_stability}.} if $x(t)$ converges to $X^*$ as $t \rightarrow \infty$.
%
The switched system is {\em globally asymptotically stable\/} with respect to $X^*$ if $\forall x_0, \sigma$, all the executions are asymptotically stable. 
We say a mode $p$ is asymptotically stable if the subsystem $\dot{x} = f_p(x)$ is globally asymptotically stable. That is, if the switched system stays forever in mode $p$ then its solutions converge to 0.   
Let the $P_s \subseteq P$ be the set of  asymptotically stable modes. 

The standard method for proving stability of dynamical systems is to construct or  find a Lyapunov function as defined below.
\begin{definition}
\label{def:lyapunov}
A $\mathcal{C}^1$ function $V_p:\mathbb{R}^n\rightarrow \mathbb{R}$ is a {\em Lyapunov function\/} for mode $p$ if there exists two class $K_\infty$ functions $\alpha_1, \alpha_2$ and $\lambda>0$ such that 
\begin{equation}
\label{eqn:v_assum1}
\alpha_1(|x|)\leq V_p(x) \leq \alpha_2(|x|), \forall x, \forall p\in P, \mathit{and}
\end{equation}
\begin{equation}
\label{eqn:exp_lyapunov}
\frac{\partial V_p}{\partial x} f_p(x) \leq -2\lambda V_p(x), \forall x\neq 0, \forall p\in P_s.
\end{equation}
\end{definition}
Condition~(\ref{eqn:exp_lyapunov}) is slightly stronger than the usual $\frac{\partial V_p}{\partial x}f_p(x)\leq -W_p(x)$, for some positive definite $W_p$, but this difference is mild because it is known that there is no loss of generality in taking $W_p(x) = 2\lambda V_p(x)$ for some $\lambda >0$~\cite{DL2003}.

\subsection{Average Stable Dwell Time}

Let us denote the number of transitions that $\sigma$  makes in a time $[t, t']$  by $N_\sigma (t,t')$. 
%
A switching signal $\sigma$ has {\em average dwell time\/} $\tau_a > 0$ if there exists integer $N_0>0$ such that for any interval $[t,t']$
\[
N_\sigma(t,t') \leq N_0+\frac{t'-t}{\tau_a}.
\]
That is, such a  switching signal $\sigma$ allows for at most one mode switch in every $\tau_a$ and an additional  burst of $N_0$ switches.

Let the number of transitions to any asymptotically stable mode over the interval $[t,t']$ be denoted by $N_{\sigma s}(t,t')$, and the total amount of time spent in stable modes be  $T_{s}(t,t')$. 
A switching signal $\sigma$ has \textit{average stable dwell time} $\tau_{as}>0$, if there exists integer $N_0>0$ such that for any interval $[t,t']$ 
\begin{equation} 
\label{eqn:tau_a_def}
N_{\sigma s}(t,t') \leq N_0 + \frac{T_s(t,t')}{\tau_{as}}
\end{equation}
Compared with $\tau_a$, $\tau_{as}$ imposes no constraint on the unstable modes. 




\subsection{Stability Analysis}
\label{sec:stability-analysis}
The visual target tracking system can be described by a class of switching system with a specific type of switching signal that alternating between visual tracking modes in $P_s$ and recovery modes in $P\textbackslash P_s$. Without loss of generality, assume $\sigma(0)\in P_s$, then all switching signals $p_0, p_1, p_2,...$ generated by $\sigma(t)$ satisfy that $p_{2k}\in P_s$ and $p_{2k+1}\in P\textbackslash P_s$, $\forall k\in \mathbb{N}$. The sequence of switching times is given by $t_1, t_2,...$ where $t_i$ is the switching time from mode $p_{i-1}$ to mode $p_i$. 
Here we provide stability analysis of this class of system.

\begin{theorem}
\label{thm:hybrid_stability}
Suppose we have a collection of Lyapunov functions $V_p$ $\forall p \in P_s$, and there exists $\mu>1, c>0$, such that for any even switch times $t_2,t_4,...$, 
\begin{equation}   
\label{eqn:v_transition_increase}
V_{\sigma(t_i)}(x(t_i))\leq \mu V_{\sigma({t_{i-1}})}(x({t_{i-1}}))+c
\end{equation}
%
Then, for any $\delta>0$, for any switching signal $\sigma$ with $\tau_{as}$ satisfy
\begin{equation}
\label{eqn:tau_a_assu}
\tau_{as} > \frac{ln(\mu+\delta)}{2\lambda}    
\end{equation}
the
the system is asymptotically stable with respect to 
\begin{equation}
\label{eqn:convergence_set}
    X^*(\delta) = \left\{x\mid |x|\leq \alpha_1^{-1}\left(c\frac{(\mu+\delta)^{N_0}}{\delta}\right)\right\}
\end{equation}
\end{theorem}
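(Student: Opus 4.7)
The plan is to reduce the continuous-time stability question to a discrete recursion on the Lyapunov values $u_k := V_{p_{2k}}(x(t_{2k}))$ taken at the instants $t_{2k}$ when the system re-enters a stable visual-tracking mode. This is the natural sampling because the decay property in (\ref{eqn:exp_lyapunov}) is only available on $P_s$, and the jump bound (\ref{eqn:v_transition_increase}) precisely compresses each unstable excursion $[t_{2k+1}, t_{2k+2})$ into a single multiplicative-plus-additive increment in $V$; the slight notational tension in $V_{\sigma(t_{i-1})}$ is resolved by reading it as the Lyapunov function of the preceding stable mode evaluated at the stable-exit state.

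First I would apply (\ref{eqn:exp_lyapunov}) on each stable interval of length $\Delta_k := t_{2k+1} - t_{2k}$ to obtain $V_{p_{2k}}(x(t_{2k+1})) \leq u_k\, e^{-2\lambda \Delta_k}$, then combine with (\ref{eqn:v_transition_increase}) at time $t_{2k+2}$ to derive the scalar recursion
\[
  u_{k+1} \;\leq\; \mu\, e^{-2\lambda \Delta_k}\, u_k + c .
\]
Unrolling gives $u_k \leq u_0 \prod_{j=0}^{k-1}\mu e^{-2\lambda \Delta_j} + c\sum_{i=0}^{k-1}\prod_{j=i+1}^{k-1}\mu e^{-2\lambda \Delta_j}$. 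To control each product I would apply the average stable dwell time bound (\ref{eqn:tau_a_def}) on the interval between the $(i{+}1)$-th and $k$-th stable re-entries to get $\sum_{j=i+1}^{k-1}\Delta_j \geq (k-1-i-N_0)\tau_{as}$, and use the hypothesis (\ref{eqn:tau_a_assu}) in the form $e^{-2\lambda \tau_{as}} < 1/(\mu+\delta)$ to bound each product by $(\mu+\delta)^{N_0}\cdot (\mu/(\mu+\delta))^{k-1-i}$. The $u_0$-prefactor then decays to $0$ geometrically, while the $c$-summation is dominated by the geometric series $\sum_{j\geq 0}(\mu/(\mu+\delta))^j = (\mu+\delta)/\delta$, yielding $\limsup_k u_k \leq c(\mu+\delta)^{N_0}/\delta$ (absorbing constants into this shape). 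Applying $\alpha_1^{-1}$ via (\ref{eqn:v_assum1}) gives convergence of the subsequence $x(t_{2k})$ into $X^*(\delta)$.

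To upgrade to convergence of the full execution, I would note that on every stable interval $V_{p_{2k}}$ is monotone decreasing, hence $|x(t)| \leq \alpha_1^{-1}(u_k)$ on $[t_{2k}, t_{2k+1})$, and any growth during the following unstable interval is, by construction of (\ref{eqn:v_transition_increase}), already captured by the bound on $u_{k+1}$; thus the full trajectory is squeezed into $X^*(\delta)$ as $k \to \infty$. The main obstacle I anticipate is the careful bookkeeping of the burst term $N_0$ through the unrolled recursion: the average stable dwell time only starts to give useful exponential decay after more than $N_0$ stable entries have accumulated, so the factor $(\mu+\delta)^{N_0}$ survives as an unavoidable prefactor, which is exactly why it appears in the definition of $X^*(\delta)$. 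A secondary subtlety is aligning the counting conventions for $N_{\sigma s}$ with the indexing of the unrolled products so that no off-by-one error is introduced in the exponent of $(\mu+\delta)$.
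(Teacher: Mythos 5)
Your proposal is correct and follows essentially the same route as the paper: both unroll the combined decay-plus-jump recursion on the Lyapunov value, use the average stable dwell time bound to convert the switch count into a factor $(\mu/(\mu+\delta))$ per stable entry (with the $(\mu+\delta)^{N_0}$ burst prefactor), and sum the resulting geometric series to obtain the limit $c(\mu+\delta)^{N_0}/\delta$ before applying $\alpha_1^{-1}$. Your discrete sampling at the stable re-entry times $t_{2k}$ is just a repackaging of the paper's unrolled inequality at arbitrary $T$, and your extra remarks on extending the bound from the sampled subsequence to the full execution slightly exceed the level of detail in the paper's own sketch.
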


\begin{proof}
We provide a sketch of the proof here. From (\ref{eqn:exp_lyapunov}) and (\ref{eqn:v_transition_increase}), if we keep unrolling the lyapunov function value at any time $T$, we can get 
\begin{equation}
\label{eqn:ineqn_unroll}
\begin{split}
& V(T) \leq \mu^\frac{N}{2} V(0) e^{-2\lambda T_s(0)} + \sum_{k=0}^{\frac{N}{2}} c \mu^ke^{-2\lambda T_s(t_{N-2k})} 
\end{split}    
\end{equation}
where we shorten  $V_{\sigma(t)}(x(t))$ as $V(t)$ and $T_s(t, T)$ as $T_s(t)$. 
Using definition of $\tau_{as}$ (\ref{eqn:tau_a_def}), $\frac{N}{2} \leq N_0 + \frac{T_s(0)}{\tau_{as}}$ and (\ref{eqn:tau_a_assu}):
\[
\begin{split}
 \mu^\frac{N}{2} V(0) e^{-2\lambda T_s(0)}  
 \leq e^{\left(-2\lambda T_s(0)  \left(1-\frac{ln\mu}{ln(\mu+\delta)}\right)\right)} \mu^{N_0}V(0)
\end{split}
\]
which goes to 0 as $T$ goes to infinity.

Since in between time interval $[t_{N-2k}, T]$, there are $k+1$ switches from unstable mode to stable modes, according to (\ref{eqn:tau_a_def}), we know $T_s(t_{N-2k})\geq (k+1-N_0)\tau_{as}$ for all k. Then 
\[
\begin{split}
\sum_{k=0}^{\frac{N}{2}} c \mu^ke^{-2\lambda T_s(t_{N-2k})} 
 \leq c\left(1-\left(\frac{\mu}{\mu+\delta}\right)^{\frac{N}{2}}\right)\frac{(\mu+\delta)^{N_0}}{\delta}
\end{split}
\]
which converges to $c\frac{(\mu+\delta)^{N_0}}{\delta}$ as $T$ goes to infinity. This gives  
\[
|x(T)| \leq \alpha_1^{-1}(V(T)) \leq \alpha_1^{-1}\left(c\frac{(\mu+\delta)^{N_0}}{\delta}\right)
\]
\end{proof}


In the visual tracking problem, the state $x$ represents the difference between the target and the pursuer, which converges to a bounded set as per (\ref{eqn:convergence_set}). The Lyapunov function $V_p$ is defined by the distance between them, and assumption (\ref{eqn:v_transition_increase}) describes the increase in this distance when the \vtc is in recovery mode. If the pursuer is recoverable, a recovery state $x_R$ exists, and the pursuer can reach $x_R$ within $t_R$, ensuring the distance increase is bounded and the assumption (\ref{eqn:v_transition_increase}) is satisfied.
The parameters in Theorem~\ref{thm:hybrid_stability} have physical meanings in this context: $\lambda$ represents the convergence rate of the pursuer in visual tracking mode, while $\mu$ and $c$ jointly define the maximum distance increase during recovery. In the next section, we demonstrate how this theorem guides the parameter design for the \vtc.


\section{Experimental Evaluation}
\label{sec:experiments}
We implemented \vtc  on an 
Agilicious drone~\cite{Foehn22science} and we will evaluate its performance in following various target trajectories of a second drone. We use {\em average tracking error (AE)\/} (i.e., the displacement  between the target and pursuer minus the {\em tracking offset}), and the {\em Fraction of time the target is visible (FTV)\/} from the pursuer as the two key performance metrics, which correspond to the proximity and the visibility requirements (recall the problem definition in Section~\ref{sec:method}). We study overall effectiveness of \vtc compared with a baseline visual tracking algorithm and how various parameters influence the  performance metrics. 

\subsection{Experimental Setup}
\label{sec:experiments:setup}
\paragraph*{Target Drone}
The target drone (Figure~\ref{fig:target_chaser_drone}) is built using a 6'' quadrotor frame. It carries a Raspberry Pi 3\textsuperscript{\textregistered} and Navio2\textsuperscript{\textregistered} for onboard computation. The target drone implements a trajectory tracking controller from~\cite{pmlr-v155-sun21b} and is capable of following a range of pre-scripted trajectories at different speeds. 

Experiments are carried out in a workspace of $5.6m\times 5.4m\times 3m$ volume. 
Although \vtc can track arbitrary trajectories, to create difficult, safe, and repeatable scenarios, we experiment with the target  moving parallel to the pursuer's image plane (This leads to frequent loss of sight and recoveries).
Here we discuss two target trajectories Ellip and SLem in the yz-plane with no movement in the x direction (Fig.~\ref{fig:LTs}) and three tracking offsets (1.0m, 1.5m and 2.0m), which provides 6 scenarios called: Ellip-1.0, SLem-1.5, etc.

\begin{figure}[h]
    \centering
    \includegraphics[width=0.95\linewidth]{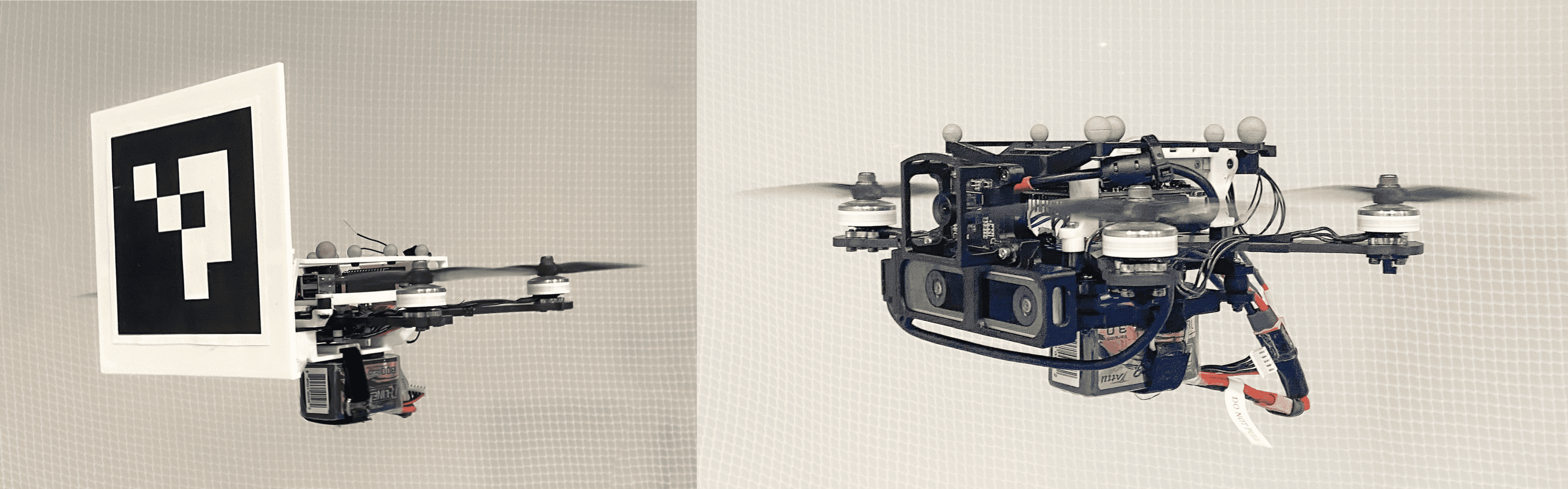}
    \caption{\small The target drone (left) and the pursuer (right).}
    \label{fig:target_chaser_drone}
\end{figure}

\begin{figure}[h]
    \centering
    \includegraphics[width=0.95\linewidth]{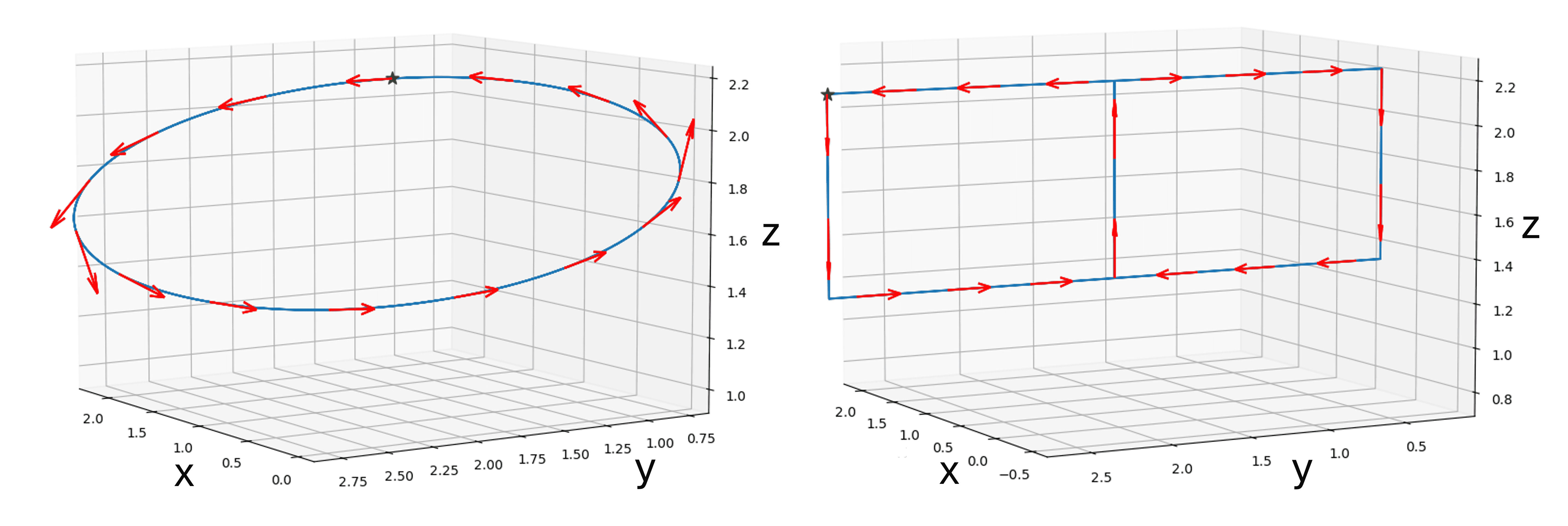}
    \caption{\small Sample of target trajectories Ellip (left) and SLem (right). 
    }
    \label{fig:LTs}
\end{figure}

\paragraph*{Pursuer Drone}
The pursuer  (Figure~\ref{fig:target_chaser_drone}) is built based on Aigilicious platform~\cite{Foehn22science}, which uses a Nvidia Jetson TX2\textsuperscript{\textregistered} as the main computer, an Intel RealSense T265\textsuperscript{\textregistered} for localization, and an Arducam B0385 monocular camera for target detection. The Arducam can run at 100fps for 640x480 resolution images and has a horizontal field of view of $\theta = 70^\circ$. 





\vtc runs entirely on the Jetson. 
The observer in \vtc uses an ArUco marker~\cite{GARRIDOJURADO20142280} on the target and a detection algorithm~\cite{opencv_library} running on the pursuer's camera images which gives a relatively accurate  estimate of the relative position of the target, when it is visible. A Kalman filter is used to smooth target's position and velocity estimates.

An MPC controller~\cite{9794477} is used by \vtc as the tracking controller $g$ in the visual tracking mode and to drive the pursuer to $x_R$ in the recovery mode. 


To compute the recovery pose $x_R$, we assume the target follows a double integrator dynamics, with acceleration in  $\pm2 m/s^2$ range.
Reachability analysis is performed using Verse~\cite{10.1007/978-3-031-37706-8_18} to compute $Reach(x_T(0), t_R)$. 
We validated empirically that with these parameters and $t_R=1.5s$ the system is $t_R$-recoverable (Definition~\ref{def:recoverability}). A  general study of recoverability gets into pursuer-evader games and is beyond the scope of this work.
Three parameters in the implementation of the \vtc are significant: 
(1) The tracking speed (\maxv) of the pursuer in visual tracking mode, which determines $\lambda$ in Theorem~\ref{thm:hybrid_stability}, is set by limiting the pursuer's x velocity. 
(2) The maximum recovery distance (\mbd) measure the maximum change in x dimension for the pursuer in recovery mode over each run of scenario. The value represent the increase of lyapunov function in $P\textbackslash P_s$ in Theorem~\ref{thm:hybrid_stability}. 
(3) The average stable dwell time ($\tau_{as}$) is one of the essential parameter in Theorem~\ref{thm:hybrid_stability} for controlling the tracking performance. 
We let $N_0=1$ and $\tau_{as}$ is measured by 
$\tau_{as} = \frac{T_s(0,T_{max})}{k-1}$
where k is the number of switches from recovery mode to visual tracking mode in a scenario run. 
In this set of experiment, $\tau_{as}$ reflects how long pursuer have vision of target ($T_s$) and how frequent the pursuer lose visual contact of target ($k$). 

Note that we don't have control over the \mbd and $\tau_{as}$. However, we can measure these values empirically and observe their effect on the  tracking performance. 

\subsection{Effectiveness of Tracking Algorithm}
We first compare  \vtc against a baseline tracking algorithm, which only follows the target when it is in sight. 
We run all 6 scenarios with both algorithms with \maxv ($\lambda$)  1m/s. For each algorithm, we run each scenario 4 times, take the average of the metrics, and report the  results in Table~\ref{tab:tracking}. 

The results show that \vtc achieves a better fraction of time visible in all scenarios and lower average tracking error in all but SLem-1.0. 
In Ellip-1.0, the average tracking error between the target and pursuer is reduced up to 45\%. 
This clearly indicate that  the \vtc helps the pursuer track the target more effectively than the baseline.
Fig.~\ref{fig:exp_run} 
illustrates example runs for Ellip-1.0, where the baseline fails to maintain tracking while \vtc successfully follows the target's motion.
Moreover, as tracking distance decreases and the leader trajectory becomes more complex, the improvement with \vtc becomes more obvious.

\begin{figure}[h]
    \centering
    \includegraphics[width=0.9\linewidth]{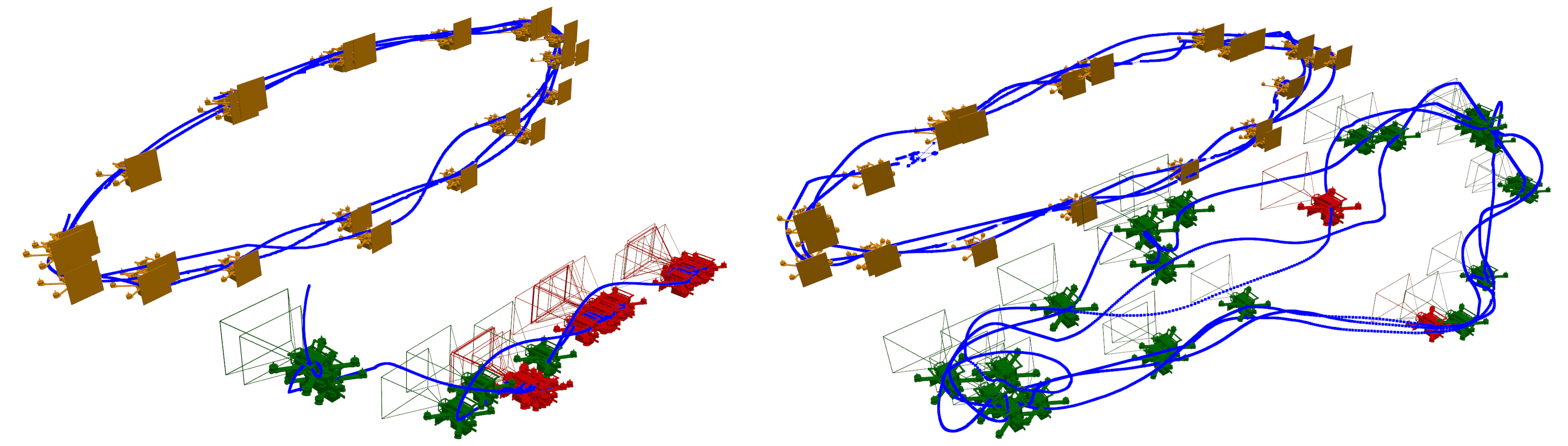} 
    \caption{An example run of Ellip-1.5 using basline (left) and \vtc (right). The pusuer is marked as red when the target is not visible.}
    \label{fig:exp_run}
\end{figure}

\begin{table}[h]
\caption{Tracking Performance with \vtc and baseline. Each data point obtained by average of 4 runs.}
\label{tab:tracking}
\begin{center}
\begin{tabular}{c c c c c}
\hline
\scenario & AE \vtc & AE Base& FTV \vtc & FTV Base\\
\hline
Ellip-1.0 &0.54& 0.98 & 85.8\%& 25.3\%\\
Ellip-1.5 &0.42 & 0.56 & 93.9\%&  52.0\%\\
Ellip-2.0 &0.36 & 0.34 & 98.7\%&  84.7\%\\
\hline
SLem-1.0 & 1.80 & 1.29 & 43.0\% & 5.2\% \\ 
SLem-1.5 &0.80 & 0.84 & 65.5\%& 27.2\%\\ 
SLem-2.0 &0.33 & 0.33 & 96.2\%& 88.3\%\\ 
\hline
\end{tabular}
\end{center}
\end{table}

\subsection{Tracking Rate and  Performance Tradeoff}
We explore the effect of \maxv on the tracking performance.
As Theorem~\ref{thm:hybrid_stability} indicates, reducing \maxv lowers $\lambda$, causing the system to converge more slowly to the set $X^*$. Consequently, $\tau_{as}$ must increase to maintain tracking performance or it will degrade.

Following this guidance, we ran scenarios Ellip-1.0 and SLem-1.5 with varying \maxv. Each parameter setting was tested 4 times, and the metrics were averaged. The results in Table~\ref{tab:lambda}
show that as \maxv decreases, average tracking error increases, while the fraction of time visible remains largely unaffected, confirming our hypothesis.
%

However, in practice, \maxv cannot be made arbitrarily large. Our experiments show that excessively high \maxv can cause movement overshoot and rapid pitch changes in the pursuer, which can reduce tracking performance. 

\begin{table}[h]
\caption{\small Visual Convergence Rate on Tracking Performance. Each data point obtained by average of 4 runs.}
\label{tab:lambda}
\begin{center}
\begin{tabular}{c c c c}
\hline
\scenario &\maxv & AE & FTV\\
\hline
Ellip-1.0 & 0.1 & 1.27 & 88.3\%\\
Ellip-1.0 & 0.5 & 0.66 & 84.4\%\\
Ellip-1.0 & 1.0 & 0.54 & 85.8\%\\
\hline 
SLem-1.5 & 0.1 & 1.25 & 58.4\% \\ 
SLem-1.5 & 0.5 & 1.07 & 47.7\%\\ 
SLem-1.5 & 1.0 & 0.81 & 65.5\%\\ 
\hline
\end{tabular}
\end{center}
\end{table}

\subsection{Recoverability-Performance Tradeoff}

We examine how tracking performance is affected by \mbd, which represents the maximum increase of the Lyapunov function (\ref{eqn:v_transition_increase}) in Theorem~\ref{thm:hybrid_stability}, particularly the constant $c$. As $c$ increases, the convergence radius grows, resulting in degraded tracking performance.


In these experiments, we consider scenarios Ellip-1.0 and Ellip-1.5, running each with different \maxv to examine how \mbd affects tracking performance. The results are shown in Table~\ref{tab:c}. We observe that as the maximum recovery distance increases, the average tracking error rises and the fraction of time the target is visible slightly decreases, indicating worsened tracking performance. However, \mbd cannot be set arbitrarily small; if the maximum recovery distance is too limited, finding $x_R$
within that range may become unfeasible, making the pursuer unable to recover from a vision loss.

\begin{table}[h]
\caption{\small Maximum Recovery distance on Tracking Performance}
\label{tab:c}
\begin{center}
\begin{tabular}{ c c c c c }
\hline
\scenario &\maxv & \mbd& AE & FTV\\
\hline
Ellip-1.5 & 1.0 & 1.23 & 0.43 & 91.8\%\\
Ellip-1.5 & 1.0 & 1.84 & 0.52 & 87.8\%\\
Ellip-1.5 & 1.0 & 1.85 & 0.51 & 89.7\%\\
\hline
Ellip-1.0 & 0.5 & 2.13 & 0.56 & 86.4\%\\
Ellip-1.0 & 0.5 & 2.43 & 0.66 & 83.6\%\\
Ellip-1.0 & 0.5 & 2.70 & 0.74 & 82.8\%\\ 
\hline
Ellip-1.0 & 1.0 & 1.29 & 0.48 & 87.4\%\\ 
Ellip-1.0 & 1.0 & 1.65 & 0.52 & 85.2\%\\ 
Ellip-1.0 & 1.0 & 2.06 & 0.63 & 84.5\%\\
\hline
\end{tabular}
\end{center}
\end{table}

Examining the change in \mbd, we note that, as outlined in Section~\ref{sec:recover_pose}, the recovery pose is computed from the $Reach(x_T(0),t_R)$, which depends on the target's initial state and the time constant $t_R$. As $t_R$ increases, the reachable set expands, placing the recovery pose farther away and making it harder for the pursuer to reach. However, $t_R$ cannot be too small; if it is, the pursuer may fail to reach the recovery pose in time due to computation and hardware delays, violating the assumptions in Definition~\ref{def:recoverability}.

\subsection{Stable Dwell Time and  Performance Tradeoff}

We look into how the tracking performance is influenced by $\tau_{as}$. We keep \maxv as 1m/s, with the leader drone following Ellip. 
To isolate the impact of $\tau_{as}$ , we use cases where \mbd is nearly constant.
The result is shown is Table~\ref{tab:asdt}.

\begin{table}[h]
    \caption{$\tau_{as}$ Tracking Performance}
\label{tab:asdt}
\begin{center}
\begin{tabular}{ c c c c c }
\hline
\scenario & \mbd& $\tau_{as}$& AE& FTV\\
\hline
Ellip-2.0 & 1.63 & 44.27 & 0.407 & 98.4\%\\
Ellip-1.5 & 1.69 & 9.88 & 0.412 & 94.2\%\\ 
Ellip-1.0 & 1.65 & 3.12 & 0.515 & 85.2\%\\
Ellip-1.0 & 1.67 & 3.15 & 0.548 & 86.1\%\\
\hline 
Ellip-1.5 & 1.21 & 12.86 & 0.411 & 95.4\%\\ 
Ellip-1.5 & 1.34 & 9.57 & 0.412 & 93.9\%\\ 
Ellip-1.5 & 1.23 & 6.17 & 0.426 & 91.8\%\\ 
Ellip-1.0 & 1.29 & 3.36 & 0.479 & 87.4\%\\ 
\hline
\end{tabular}
\end{center}
\end{table}

The results show that as 
$\tau_{as}$ increases, the average tracking error decreases and the fraction of time visible increases, indicating improved overall tracking performance. 
A higher $\tau_{as}$ means the pursuer loses sight of the target less frequently or maintains visual contact for longer, naturally leading to a higher fraction of time visible. 
This aligns with the intuition that the system performs better with fewer losses of visual contact.

\section{Related Works}
\label{sec:related}


A  challenging version of the  visual  tracking problem in unknown, cluttered environments is studied in~\cite{9196703,7759092,9561948}. The system developed in~\cite{9561948}  uses trajectory prediction and a  planner that combines kinodynamic search with spatio-temporal optimization to maintain safe tracking trajectories. The prediction errors arising from  temporary occlusions are mitigated through high-frequency re-planning. While the  experimental results are compelling,  these approaches are not analytically tractable and don't have any mechanisms for handling longer-term loss of the target. In contrast, our work explicitly addresses the challenge of vision loss by incorporating a  recovery mechanism and provides stability analysis with recovery. 


The problem of tracking multiple targets in a simply-connected 2D environment is studied in~\cite{5980372}. The approach uses a probabilistic model to generate sample trajectories that represent the likely movements of evader and construct a path for the purser that optimize the expected time to capture while ensuring guaranteed time to capture doesn't significantly increase.  The proposed recovery strategy involves systematically clearing the entire environment to ensure all regions are visible. However, this method requires the environment to be bounded and known in advance, whereas our approach does not impose specific constraints on the environment.

\section{Limitations and Future Directions}
\label{sec:limits}
We presented the design of  a Switched Visual Tracker (\vtc) that switches  between a visual tracking and a recovery modes to effectively track moving targets that can  becomes intermittently invisible. We analyzed the tracking performance of the system by extending the average dwell time theorem from switched systems theory. The proposed \vtc is implemented on hardware, and its effectiveness is demonstrated through several experiments.

One limitation of the current work comes from the  recoverability assumption (Definition~\ref{def:recoverability}). For our \vtc to be recoverable, we assumed the existence of a recovery point, $x_R$, and that both the computation of $x_R$ and the movement of the pursuer to $x_R$ occur within the specified time constant $t_R$. The computation of $x_R$ used reachability analysis and did not rely on sophisticated trajectory predictions. Developing  sufficient conditions for formally checking recoverability for different kinds of  target trajectories, predictions, and pursuer dynamics would broaden the applicability of $\vtc$.

Another future direction is to extend the application of  \vtc to more complex environments with obstacles. Consequently, computing the recovery pose will need path planning around  obstacles, adding an additional layer of challenge in the design and analysis.








\bibliographystyle{IEEEtran}
\bibliography{egbib,sayan1}
\end{document}